\let\oldnl\nl
\newcommand{\nonl}{\renewcommand{\nl}{\let\nl\oldnl}}
\newtheorem{exmp}{Example}
\newtheorem{prob}{Problem Statement}
\setlist{nolistsep}
\algnewcommand{\LeftComment}[1]{\Statex \(\triangleright\) #1}
\begin{document}

\title{Exploring Inevitable Waypoints for Unsolvability Explanation in Hybrid Planning Problems}
%
%
\author{Mir Md Sajid Sarwar}
\affiliation{%
  \institution{School of Mathematical and Computational Sciences, Indian Association for the Cultivation of Science, Kolkata}
  \country{India}
  }
 \email{mcsss2275@iacs.res.in}

\author{Rajarshi Ray}
\affiliation{%
  \institution{School of Mathematical and Computational Sciences, Indian Association for the Cultivation of Science, Kolkata}
  \country{India}
  }
\email{rajarshi.ray@iacs.res.in}

\keywords{Explainable AI Planning \and Hybrid Systems \and Unsolvability \and Sub-problem \and Bounded Reachability.}

\begin{abstract}
Explaining unsolvability of planning problems is of significant research interest in Explainable AI Planning. A number of research efforts on generating explanations of solutions to planning problems have been reported in AI planning literature. However, explaining the unsolvability of planning problems remains a largely open and understudied problem. A widely practiced approach to plan generation and automated problem solving, in general, is to decompose tasks into sub-problems that help progressively converge towards the goal. In this paper, we propose to adopt the same philosophy of sub-problem identification as a mechanism for analyzing and explaining unsolvability of planning problems in hybrid systems. In particular, for a given unsolvable planning problem, we propose to identify common waypoints, which are universal obstacles to plan existence, in other words, they appear on every plan from the source to the planning goal. This work envisions such waypoints as sub-problems of the planning problem and the unreachability of any of these waypoints as an explanation for the unsolvability of the original planning problem. We propose a novel method of waypoint identification by casting the problem as an instance of the longest common subsequence problem, a widely popular problem in computer science, typically considered as an illustrative example for the dynamic programming paradigm. 
Once the waypoints are identified, we perform symbolic reachability analysis on them to identify the earliest unreachable waypoint and report it as the explanation of unsolvability. We present experimental results on unsolvable planning problems in hybrid domains. 
\end{abstract}

\begin{CCSXML}
	<ccs2012>
	<concept>
	<concept_id>10010520.10010575.10010577</concept_id>
	<concept_desc>Theory of computation~Timed and hybrid models</concept_desc>
	<concept_significance>500</concept_significance>
	</concept>
	<concept>
	<concept_id>10010147.10010178</concept_id>
	<concept_desc>Computing methodologies~Artificial intelligence</concept_desc>
	<concept_significance>500</concept_significance>
	</concept>
	<concept>
	<concept_id>10003033.10003099</concept_id>
	<concept_desc>Human-centered computing~Human computer interaction (HCI)~Interactive systems and tools</concept_desc>
	<concept_significance>500</concept_significance>
	</concept>
	</ccs2012>
\end{CCSXML}

\ccsdesc[500]{Computing methodologies~Artificial intelligence}
\ccsdesc[500]{Human-centered computing~Interactive systems and tools}
\ccsdesc[500]{Theory of computation~Timed and hybrid models}

\keywords{Explainable AI Planning, Hybrid Systems, Unsolvability, Sub-problem, Bounded Reachability.}

\maketitle

\section{Introduction}
In human-computer interaction (HCI), humans engage in trustworthy collaborations with autonomous agents, and one of the precursors of such collaborations is that an autonomous agent must explain the rationale behind its decision to the human. With the emergence of artificial intelligence (AI) and the multitude of application domains where AI planning is being envisioned to replace plans generated by humans, Explainable AI Planning (XAIP)~\cite{DBLP:journals/corr/abs-1709-10256,DBLP:conf/rweb/HoffmannM19} has emerged as an important connection between HCI and AI for designing explainable systems that bridges the gap between theoretical and algorithmic planning and real-world applications \cite{ijcai2020p669}.
While there has been a lot of research on generating explanations to planning problems, most of the earlier works in explanation generation have focused on explaining why a given plan or action was chosen \cite{DBLP:conf/ijcai/ChakrabortiSZK17,DBLP:conf/aips/ChakrabortiKSSK19,DBLP:journals/jair/KrarupKMLC021,DBLP:journals/tecs/SarwarRB23}. However, explaining the unsolvability of a given planning problem remains a largely open and understudied problem.
The recent works that focus on explaining the unsolvability of planning problems have primarily concentrated on generating certificates or proofs of unsolvability \cite{DBLP:conf/aips/ErikssonRH17,DBLP:conf/aips/ErikssonRH18}, or on identifying counterfactual alterations to the original planning task to make it solvable, often referred to as "excuses" \cite{DBLP:conf/aips/GobelbeckerKEBN10}. These approaches, which are more oriented towards automatic verification, may fall short in adequately explaining unsolvability in complex planning domains.

A well-known insight into human thinking and problem-solving is that humans tend to decompose a problem into sub-problems that help in progressively converging towards the goal. Many AI systems mimic this notion in the way they solve problems. For instance, the main feature of the pioneering automated theorem prover, \emph{logic theorist}, is the use of problem-subproblem hierarchy \cite{LT-1956}. An innovative technique for identification of sub-problems relevant for explaining unsolvability of a planning problem in domains with discrete dynamics has been proposed in \cite{DBLP:conf/ijcai/SreedharanSSK19}. 
In this paper, we propose to adopt the same philosophy of sub-problem identification as an efficient mechanism for analyzing and explaining unsolvability of planning problems in hybrid domains, domains with a combination of discrete and continuous dynamics. In particular, for a given unsolvable planning problem, we propose to identify a sequence of waypoints, which are universal obstacles to plan existence, in other words, they appear on every path on every plan from the source to the planning goal. This work envisions such waypoints as sub-problems of the planning problem and the unreachability of any of these waypoints as an explanation for the unsolvability of the original problem at hand. We propose a novel method of waypoint identification by casting the problem as an instance of the longest common subsequence problem, a widely popular problem in computer science, typically considered as an illustrative example for the dynamic programming paradigm. Once the waypoints are identified, we perform symbolic reachability analysis on them to identify the earliest unreachable waypoint and report it as the explanation of unsolvability. We present experimental results on unsolvable planning problems in hybrid domains. In summary, the key contributions of this paper are:
\begin{enumerate}
 \item[(a)] A proposal for an artifact for explaining unsolvability of hybrid planning problem based on the identification of inevitable waypoints.
 \item[(b)] A method to generate the explanation artifact by casting it as an instance of the longest common subsequence problem, and subsequently using symbolic reachability analysis on the hybrid automaton.
 
\end{enumerate}

The rest of this paper is organized as follows. In Section~\ref{motive}, we present a motivating example to demonstrate this work. Section~\ref{problem-overview} provides a background and problem overview.  
Section~\ref{methodology} illustrates our methodology and the framework of explanation. Section~\ref{eval} discusses implementation and results. Section~\ref{related_works} presents related literature. Finally, Section~\ref{conclusion} summarizes the contributions and the findings of this work and discusses possible future directions.
\section{Motivating Example} \label{motive}
In this section, we present a motivating example in the context of a planning problem for a planetary rover that explores a planetary site and collects samples for experiments. The agent (autonomous battery powered rover) possesses knowledge about the topography of the exploration site as a planar grid.
Figure~\ref{fig:rover-domain} shows the topography as a 5$\times$5 grid of cells. The rover is initially positioned at cell 11 and there is a base-station at cell 25. The task of the rover is to collect soil and rock samples from designated sites, in cells 1 and 14 respectively and then reach the base-station. Mountainous regions and craters in the terrain are marked as impassable (cells 7, 12, etc). There are inclined areas in the terrain shown in Orange cells. The motion dynamics of the rover is interpreted as a hybrid system. The rover's continuous motion and battery discharge dynamics can be different in each cell.
For instance, the motion and battery depletion dynamics in an inclined region is different from the dynamics in flat-regions and regions where soil and rock samples are collected. When a rover makes a transition from one cell to another, it starts to follow the dynamics of the new cell instantaneously. The discrete dynamics here capture the connectivity of the cells in the presence of mountains and craters where the rover cannot move. The rover's movement is restricted to one of its adjacent cells and movement to diagonal cells is prohibited.

\begin{figure}[htbp]
    \centering
    \includegraphics[width=0.66\textwidth]{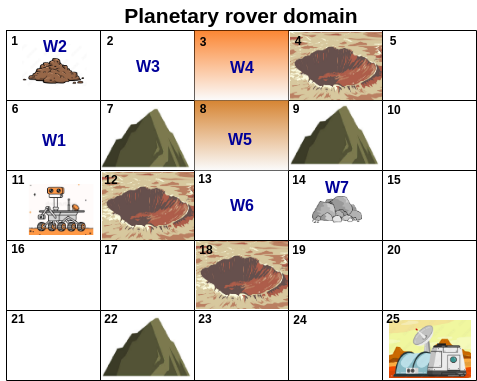}
    \caption{Rover-domain is depicted. Initially, the rover is at cell 11. Mountains and craters are impassable regions of the terrain such as cells 7, 12, etc. The rover needs to collect soil samples from cell 1 and rock samples from cell 14. The base station is at cell 25. The up-slope areas in the terrain are shown in Orange. The inevitable waypoints in the domain are cells $w1$ to $w7$ marked in Blue.}
    \label{fig:rover-domain}
\end{figure}

When a planner reports the planning task as unsolvable, our algorithm identifies ordered waypoints that ought to be reached in any plan in order to achieve the task. For instance, in the discussed domain, our algorithm detects that the cells 6-1-2-3-8-13-14 must be visited by any valid plan to solve the planning task. These are marked as ordered waypoints $w1 - w7$ in the figure. Our proposed algorithm envisions such waypoints as sub-goals of the planning problem. The unreachability of any of these waypoints under the domain dynamics is reported as an explanation to the unsolvability of the original planning problem. The computational challenge lies in finding the waypoints, finding an order between them and lastly, finding the earliest waypoint which is unreachable under the dynamics.  We propose a method of waypoint identification by casting the problem as an instance of the longest common subsequence problem, a widely popular problem in computer science, typically considered as an illustrative example for the dynamic programming paradigm. Unreachability of a waypoint is determined using a bounded model checker. For instance, given the initial rover battery charge of 10 units and the battery depletion rates of the cells (depletion rate of 1 unit in cells except in cell 1 and cell 14 where soil and rock sampling depletes battery at a higher rate of 2 units, and in the cells with inclination having a depletion rate of 3 units), reachability analysis determines that cell 13 is the first unreachable waypoint and reports this as an explanation of unsolvability.

Some of the waypoints as sub-goals may be \textbf{explicitly} known from the planning problem description itself. For example, the planetary rover domain has two sub-goals explicitly mentioned, namely the collection of soil and rock samples from cell 1 and cell 14 respectively, marked as waypoints $w2$ and $w7$ respectively. There may be sub-goals that are not apparent from the problem description explicitly but they are \textbf{implicitly} mandatory to complete the bigger planning task. We term these as inevitable waypoints. For example, the inevitable waypoints in our planetary rover domain are marked $w1$ and $w3-w6$ in the figure. Our waypoint detection algorithm detects both the explicit as well as implicit waypoints. In the following section, we formally describe the domain representation and the explanation problem we intend to solve.

\section{Problem Overview} \label{problem-overview}

We begin with a formal definition of a planning problem in a hybrid system. A hybrid system exhibits an interplay of discrete and continuous dynamics. Hybrid automata are a well-known mathematical model for such systems \cite{ALUR19953,10.1007/3-540-57318-6_30}. 
We now define a hybrid automaton model of a hybrid system.

\begin{definition} \label{def:HA} \cite{ALUR19953}
A hybrid automaton (HA) is a seven-tuple HA=\big(\emph{Loc}, \emph{Var}, \emph{Flow}, \emph{Init}, \emph{Lab}, \emph{Edge}, \emph{Inv}\big) where:
\begin{itemize}
    \item \emph{Loc} is a finite set of vertices called locations.
    \item \emph{Var} is a finite set of real-valued variables. A valuation $v$ is an assignment of a real value to each variable $x \in$ \emph{Var}. We write $V$ for the set of all valuations.
    \item \emph{Flow} is a mapping from each location $l \in \emph{Loc}$ to a set of differential equations $\{\dot{x} \in f(x_1,\ldots,x_{|Var|}) \mid x \in \emph{Var}\}$, where $\dot{x}$ denotes the rate of change of variable $x$.
    \item \emph{Init} is a tuple $\langle l_{0}, S \rangle$ such that $l_{0} \in \emph{Loc}$ and $S \subseteq V$.
    \item \emph{Lab} is a finite set of labels.
    \item \emph{Edge} is a finite set of transitions $e = (l, a, g, r, l’)$, each consisting of a source location $l \in \emph{Loc}$, a target location $l'\in \emph{Loc}$, a label $a\in \emph{Lab}$, a guard $g \subseteq V$ and a reset map $r: \mathbb{R}^{|Var|} \to 2 ^ {\mathbb{R}^{|Var|}}$.
    \item \emph{Inv} is a mapping from each location $l\in \emph{Loc}$ to a subset of valuations $V$. $\Box$
\end{itemize}
\end{definition}

\noindent Figure~\ref{fig:rover-model} shows the hybrid automaton for the planetary rover domain, as an example. Each cell in Figure~\ref{fig:rover-domain} is represented as a location of the automaton with an invariant that is the region enclosed by the cell. The battery charge depletion rate and motion dynamics of the rover within the cell is modeled as flow equations of the location. The cell-to-cell movement of the rover is given as transitions between locations. The initial location is shown in green. The yellow locations represent the regions where the rover collects soil and rock samples. The orange locations represent the inclined regions. The base station, the rover's destination, is shown in red. An impassable location (loc7) is shown in grey with no incoming or outgoing edges. 

\begin{figure*}[htbp]
    \centering
    \includegraphics[width = 0.95 \textwidth]{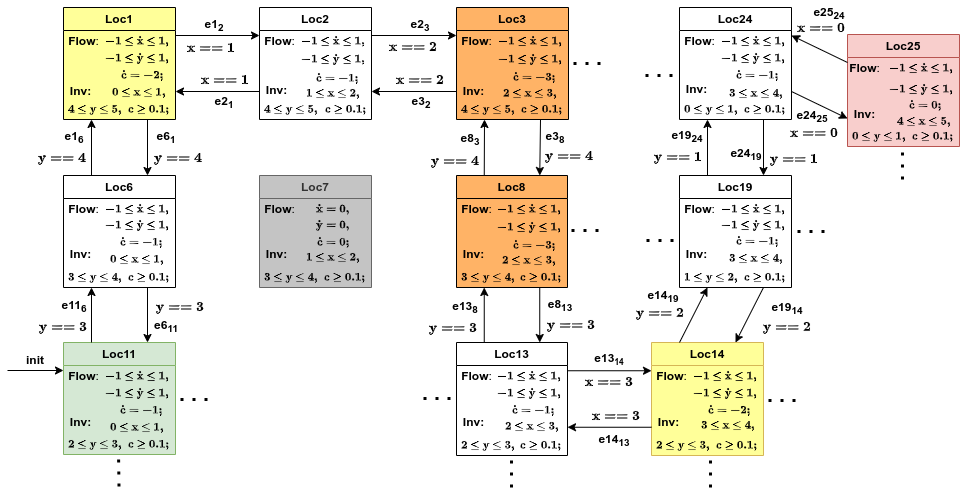}
 \caption{A hybrid automaton model of the planetary rover domain (partially shown).}
    \label{fig:rover-model}
\end{figure*}

\noindent A planning problem typically consists of a domain description and the initial and goal states of the planning task. In this work, as part of the problem description, we also consider a bound on how many times the actions of the domain can be applied. We define a planning problem for a hybrid system as follows:

\begin{definition} \label{def:planning_problem}
A \emph{planning problem} $\Pi$ for a hybrid system is a three-tuple $(Dom$, $Prob$, $Depth)$, where
\begin{itemize}
  \item $Dom$ \cite{DBLP:journals/tecs/SarwarRB23} is represented as a hybrid automaton HA. 
  \item $Prob$ is a tuple $\langle Init, Goal\rangle$ representing a problem description, where $Init$ and $Goal$ define the initial and the goal states. 
  \item \emph{Depth} defines the bound on how many actions can be applied in a plan. 
  $\Box$
\end{itemize}
\end{definition}

\noindent A state of a \emph{HA} is a pair $(l,v)$ consisting of a location $l \in$ Loc and a valuation $v\in $ Inv($l$). The Init of the automaton defines the set of initial states $\{(l_{0}, v) \mid v \in S\}$ of a planning problem. The $Goal$ states of the planning problem are given as a tuple $\langle l_{goal}, S_{goal} \rangle$ such that $l_{goal} \in Loc$ and $S_{goal} \subseteq$ Inv($l_{goal}$). The tuple represents goal states $\{(l_{goal}, v) \mid v \in S_{goal} \}$. The set of labels of the hybrid automaton corresponds to the available actions for a plan. A state can change either due to a transition in the automaton or due to the passage of time where the variables evolve according to the flow in a location. We refer to the former as \emph{discrete-transition} and the latter as \emph{timed-transition}. A discrete transition happens due to the application of an action given by the label of the transition provided the valuation of the state satisfies the guard of the transition. On taking a discrete transition, the valuation of the new state must follow the reset map of the transition. We now define a plan for a planning problem $\Pi$: 

\begin{definition}\label{def:plan}
A plan for a planning problem $\Pi$ is a tuple $\langle$ $\lambda_n$, $makespan$ $\rangle$ where $\lambda_n$ is a finite sequence of $n$ pairs $\langle t_i, a_i \rangle$. In the pair, $t_i \in \mathbb{R}^+$ is the time instance of executing the action $a_i \in$ \emph{Lab}. In the sequence, $t_i$ is non-decreasing. The $makespan$ is the duration of the plan. $\Box$
\end{definition}

\noindent We refer to the length of a plan to be the number of pairs $\langle t_i, a_i \rangle$ in $\lambda_n$. An \emph{executable} plan on a \emph{HA} is defined as follows:

\begin{definition}\label{def:run}
A plan $\langle \lambda_n, makespan \rangle$ for a planning problem $\Pi$ is called executable on the domain HA of $\Pi$ if and only if the application of the plan on HA results in an alternating sequence of timed and discrete transitions, called a run of the HA depicted as:
\begin{align*}
(l_{0},v_{0}) &\xrightarrow{\tau_0}(l_{0},v'_0) \xrightarrow{a_1} (l_1,v_{1}) \xrightarrow{\tau_1} (l_1,v'_{1})\xrightarrow{a_2} \ldots \xrightarrow{a_{n}}(l_{n},v_n) \xrightarrow{\tau_{n}} \\(l_{n},v'_{n})
\end{align*}
where 
(i) $(l_{0},v_{0}) \in \emph{Init}$ 
(ii) $a_i$ is a label of some edge $e_i \in \emph{Edge}$ such that $l_{i-1}$ is the source and $l_{i}$ is the destination location of $e_i$, $v'_{i-1} \in g$ and $v_{i} \in r(v_{i-1})$ where $g$ is the guard and $r$ is the reset map of $e_i$, $\forall i \in \llbracket 1..n\rrbracket$ 
(iii) The transitions labeled $\tau_i \in \mathbb{R}$ represent timed transitions with $\tau_i$ being the time of dwelling in the location, with the constraint that $\forall t \in [0, \tau_i]$, the timed transition $(l_i,v_{i}) \xrightarrow{t} (l_i,v^+_{i})$ has $v^+_{i} \in Inv(l_i)$, $\forall i \in \llbracket 0..n\rrbracket$. 
(iv) $\langle \sum_{j=0}^{i-1} {\tau_i}, a_i \rangle$ is a pair in $\lambda_n$, $\forall i \in \llbracket 1..n\rrbracket$ (v) $\sum_{i=0}^{n}{\tau_i}$ = \emph{makespan}.
$\Box$
\end{definition}
\noindent The length of a run is the number of discrete transitions it contains. In control theoretic terms, a plan is a control strategy that acts on a plant, a hybrid automaton in our context. The application of a control strategy on a plant results in a controlled execution of the plant, which we call a \emph{run} in our context. Due to uncertainties modeled in a \emph{HA}, an application of a plan may result in more than one runs. A plan is called \emph{valid} if it is executable, its applications on the \emph{HA} results in a run from a state in $Init$ to a state in $Goal$ and the length of the plan is less than or equal to Depth. In the following text, we write "a run of a valid plan" as a short-form of saying "the resulting run of the domain \emph{HA} of $\Pi$ due to the application of a valid plan". A planning problem $\Pi$ is called solvable if it admits a valid plan. If no such plan exists, then the planning problem is said to be unsolvable.
\begin{definition}
    A planning problem $\Pi$ is unsolvable if it admits no valid plan. $\Box$
\end{definition}
\noindent The problem addressed in this work is as follows:

\begin{prob}
Given an unsolvable planning problem $\Pi$, generate an artifact automatically that explains why is $\Pi$ unsolvable.
\end{prob}
\noindent In the subsequent sections, we describe the details of the explanation artifact and the algorithm to generate the same.

\section{Methodology} \label{methodology}
\begin{figure*}[htbp]
    \centering
    \includegraphics[width= \textwidth]{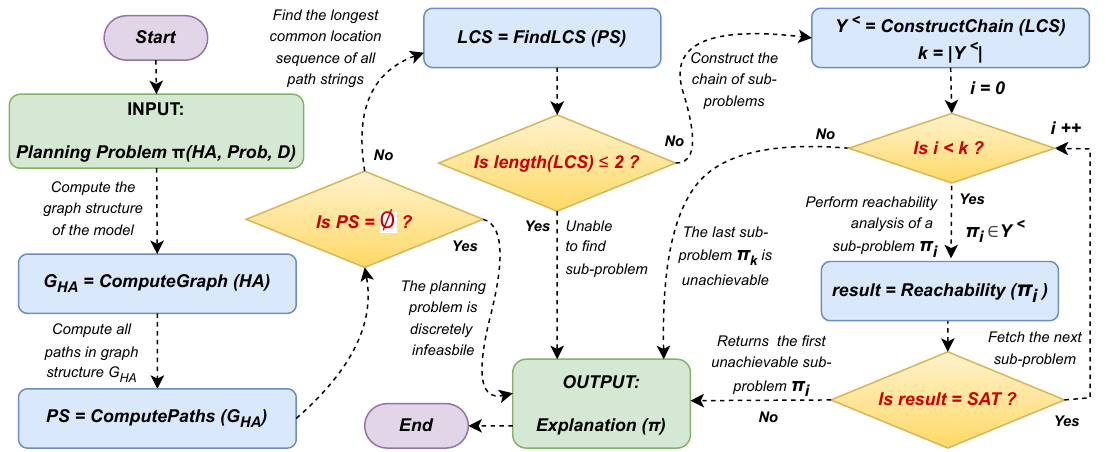}
    \caption{Proposed Explanation Framework.}
    \label{fig:framework}
\end{figure*}

\noindent In this section, we describe our explanation algorithm which takes an unsolvable planning problem $\Pi$ as input and computes an artifact $Explanation (\Pi)$, which we define later in the text (Defn. 7). Our explanation algorithm attempts to divide an unsolvable planning problem $\Pi$ into several sub-problems, following the common divide-and-conquer paradigm of problem-solving. These sub-problems have the property that each must be solvable for $\Pi$ to be solvable. Identifying these sub-problems is computationally challenging and is the key to generating the explanation artifact. The proposed algorithm takes a layered approach. The sub-problems are determined by taking into consideration only the discrete dynamics of the hybrid automaton. Once the sub-problems are identified, the explanation artifact is generated by considering the hybrid dynamics in its entirety. The abstraction of the continuous dynamics in the first phase allows us to work in the domain of graphs and consequently, we show a reduction from finding sub-problems to finding the longest common sub-sequence of a finite set of strings, a well-known problem in algorithms. Finally, the feasibility of the identified sub-problems is verified using symbolic reachability analysis and the explanation is generated which highlights which of the sub-problems is responsible for the unsolvability of $\Pi$. A schematic diagram of our explanation framework is shown in Figure~\ref{fig:framework}. We now present the details. 
%

\subsection{Decomposition into Sub-Problems}

We now define the notion of a sub-problem as a relation between hybrid planning problems.
\begin{definition}\label{def:subproblem}
    Given planning problems $\Pi_i$ = $(Dom_i$, $Prob_i$, $Depth_i)$ and $\Pi_j$ = $(Dom_j$, $Prob_j$, $Depth_j)$, we say $\Pi_i$ is a sub-problem of $\Pi_j$ when the following conditions hold: (a) $Dom_i = Dom_j$, (b) $Depth_i = Depth_j$, and (c) $Prob_i$, $Prob_j$ are tuples $\langle$ $Init_i$, $Goal_i$ $\rangle$ and $\langle$ $Init_j$, $Goal_j$ $\rangle$ resp. where \begin{itemize}
            \item $Init_i = Init_j$ and 
            \item Every run of a valid plan of $\Pi_j$ intersects with $Goal_i$. 
        \end{itemize}
\end{definition}
\noindent $\Pi_i$ and $\Pi_j$ only differ in the goal states in a way that every valid run of $\Pi_j$ goes past some goal state of $\Pi_i$. $\Pi_i$ is thus an inevitable \emph{waypoint} to $\Pi_j$.

\begin{exmp} \label{exp:sub-problem-example}
    Consider a planning problem in a rover-like domain explained above, the state-space shown in Figure~\ref{fig:subgoal-example}. The states within each cell belong to the invariant of a distinct location of the HA model with nine locations. The initial states are in the green region and the goal states are in the red region. The shaded regions are impassable. 
    Some of the runs of valid plans are shown as red trajectories from an initial state to a goal state. Note that any run of a valid plan has to pass through the blue region of the state-space and therefore when taken as new goal states, gives a sub-problem to the planning problem and is an inevitable waypoint.
\end{exmp}
\begin{figure}[htbp]
  \includegraphics[scale=0.50]{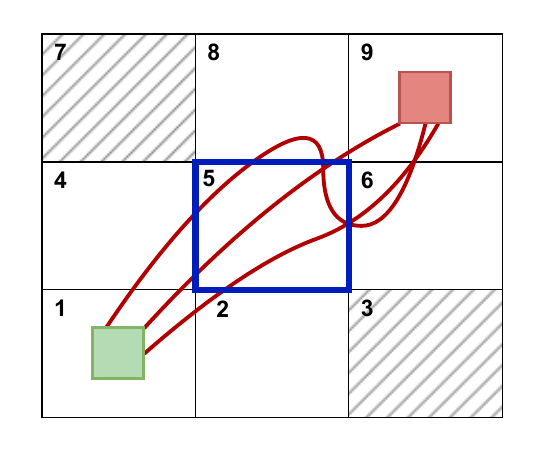}
  \caption{All runs of valid plans from initial to the goal must pass through the blue doorway. We envision the blue region as an inevitable way-point to the planning problem.}
  \label{fig:subgoal-example}
\end{figure}

\noindent  In the following text, we interchangeably refer to a sub-problem as a \emph{waypoint}. We write $\Pi_i < \Pi_j$ to say that $\Pi_i$ is a sub-problem of $\Pi_j$.  A planning problem $\Pi$ can have multiple sub-problems. We denote the set of all sub-problems of $\Pi$ by $\Pi^{\sqcup}$.
\begin{equation}
\Pi^{\sqcup} = \{ \Pi' \mid \Pi' < \Pi \}
\end{equation}

\subsubsection{Abstraction over Planning Problems}

\noindent The cardinality of $\Pi^{\sqcup}$ can be potentially infinite. We therefore present the following construction of a finite set of planning problems $\Pi^*$ for a given $\Pi$ induced by its \emph{HA} domain. We also define an abstraction function $\alpha$ that maps each $\Pi' \in \Pi^{\sqcup}$ to an element of $\Pi^*$ such that if $\Pi' < \Pi$ then $\alpha(\Pi') < \Pi$. We then proceed with our analysis on this finite abstraction $\Pi^*$.

\begin{definition}\label{abstractPi}
Let $\Pi = ( $HA$, \langle Init, Goal \rangle, Depth)$ be a planning problem. We define a finite set of planning problems $\Pi^*$ as follows:
\begin{equation}
    \Pi^* = \bigcup_{\ell \in Loc} \Pi_{\ell}
\end{equation}
where \emph{Loc} is the finite set of locations of HA, $\Pi_{\ell}$ = $\langle$ $HA$, $\langle Init$, $Goal_{\ell} \rangle$, $Depth$ $\rangle$ and $Goal_{\ell} = \langle \ell, Inv(\ell) \rangle$.
\end{definition}

\noindent The cardinality of $\Pi^*$ will be the cardinality of Loc of \emph{HA}. For example, the cardinality of $\Pi^*$ for the problem $\Pi$ of Figure \ref{fig:subgoal-example} is nine where each problem will have the tuple consisting of a location of the automaton and the corresponding invariant as its \emph{Goal}. The definition of $\alpha:\Pi^{\sqcup} \to \Pi^*$ for a given $\Pi' = (HA,\langle Init, Goal\rangle, Depth \rangle)$, where $Goal = \langle \ell_{goal}, S_{goal} \rangle$ is given as.   
\begin{equation}
\alpha(\Pi')=\Pi_{\ell_{goal}}
\end{equation}

\begin{proposition}
If $\Pi' \in \Pi^{\sqcup}$ then $\alpha(\Pi') \in \Pi^{\sqcup}$.
\end{proposition}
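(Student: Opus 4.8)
The plan is to show that if $\Pi'$ is a sub-problem of $\Pi$, then so is $\alpha(\Pi') = \Pi_{\ell_{goal}}$, where $\ell_{goal}$ is the goal location of $\Pi'$. First I would unpack the definition of sub-problem (Definition~\ref{def:subproblem}): since $\Pi' < \Pi$, we have $Dom' = Dom$, $Depth' = Depth$, $Init' = Init$, and crucially every run of a valid plan of $\Pi$ intersects $Goal' = \langle \ell_{goal}, S_{goal}\rangle$, i.e. passes through some state $(\ell_{goal}, v)$ with $v \in S_{goal} \subseteq Inv(\ell_{goal})$. I now need to verify the three conditions of Definition~\ref{def:subproblem} for the pair $(\Pi_{\ell_{goal}}, \Pi)$. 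Conditions (a) and (b) are immediate from Definition~\ref{abstractPi}, since $\Pi_{\ell_{goal}}$ is built on the same $HA$ and the same $Depth$ as $\Pi$; and $Init$ is the same by construction as well.

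The only substantive point is the second bullet of condition (c): every run of a valid plan of $\Pi$ must intersect $Goal_{\ell_{goal}} = \langle \ell_{goal}, Inv(\ell_{goal})\rangle$. This follows by a monotonicity/weakening argument: we already know every such run passes through a state $(\ell_{goal}, v)$ with $v \in S_{goal}$. Since $S_{goal} \subseteq Inv(\ell_{goal})$, that same state $(\ell_{goal}, v)$ lies in $\{(\ell_{goal}, v') \mid v' \in Inv(\ell_{goal})\}$, which is exactly the goal set of $\Pi_{\ell_{goal}}$. Hence the intersection is nonempty, establishing $\Pi_{\ell_{goal}} < \Pi$, i.e. $\alpha(\Pi') \in \Pi^{\sqcup}$.

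The argument is essentially a one-line observation once the definitions are spelled out — the "hard part" is purely bookkeeping: being careful that the goal set of a planning problem is the set of \emph{states} $\{(\ell, v) \mid v \in S\}$ rather than just the valuation set $S$, so that the containment $S_{goal} \subseteq Inv(\ell_{goal})$ correctly lifts to containment of the corresponding state sets (this uses that both state sets share the same location component $\ell_{goal}$). One subtlety worth a sentence in the write-up: the statement implicitly requires $\Pi_{\ell_{goal}}$ to be a \emph{well-formed} planning problem, which it is, since $Goal_{\ell_{goal}} = \langle \ell_{goal}, Inv(\ell_{goal})\rangle$ trivially satisfies $Inv(\ell_{goal}) \subseteq Inv(\ell_{goal})$. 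No reachability analysis or continuous-dynamics reasoning is needed here — that enters only in later phases of the framework.
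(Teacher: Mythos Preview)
Your proposal is correct and follows essentially the same approach as the paper: both arguments hinge on the observation that $S_{goal} \subseteq Inv(\ell_{goal})$, so the goal states of $\Pi'$ are contained in those of $\alpha(\Pi')$, and hence any run intersecting the former intersects the latter. Your version simply spells out the bookkeeping for conditions (a), (b), and the $Init$ clause of (c) more explicitly than the paper does.
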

\begin{proof}
The goal states of $\Pi'$ is a subset of the goal states of $\alpha(\Pi')$ and therefore if every valid run of $\Pi$ intersect the goal states of $\Pi'$, then they also intersect the goal states of $\alpha(\Pi')$. Hence, $\alpha(\Pi') < \Pi$ and thus $\alpha(\Pi') \in \Pi^{\sqcup}$. 
\end{proof} 

\begin{proposition}
The ordered pair $\langle \Pi^*, < \rangle$ is a partially ordered set (poset).
\end{proposition}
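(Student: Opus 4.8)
The plan is to check directly that the relation $<$ of Definition~\ref{def:subproblem}, restricted to $\Pi^*$, is reflexive, antisymmetric, and transitive. First I would record a simplification: all members of $\Pi^*$ share the same $HA$, the same $Init$, and the same $Depth$ (Definition~\ref{abstractPi}), so conditions (a), (b) and the $Init$-clause of (c) in Definition~\ref{def:subproblem} hold automatically for every pair drawn from $\Pi^*$; moreover $Goal_{\ell}=\langle \ell, Inv(\ell)\rangle$ comprises \emph{all} states at location $\ell$, so a run of $HA$ intersects $Goal_{\ell}$ exactly when its location sequence visits $\ell$, and $\ell\mapsto\Pi_{\ell}$ is injective on $Loc$. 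Hence the proposition reduces to showing that the relation on $Loc$ given by ``$\ell_1\le\ell_2$ iff every run of every valid plan of $\Pi_{\ell_2}$ visits $\ell_1$'' is a partial order, and then transporting it back to $\Pi^*$.

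Reflexivity is immediate from Definitions~\ref{def:plan} and~\ref{def:run}: every run of a valid plan of $\Pi_{\ell}$ terminates in $Goal_{\ell}$, i.e.\ at location $\ell$, so $\Pi_{\ell}<\Pi_{\ell}$. For transitivity, suppose $\Pi_{\ell_1}<\Pi_{\ell_2}$ and $\Pi_{\ell_2}<\Pi_{\ell_3}$ and let $\rho$ be a run of an arbitrary valid plan of $\Pi_{\ell_3}$. By the second hypothesis $\rho$ visits $\ell_2$; truncating $\rho$ at the first state whose location is $\ell_2$ (shortening the final dwell if needed) yields a run $\rho'$ that starts in $Init$, ends in $Goal_{\ell_2}$, and has length at most that of $\rho$, hence at most $Depth$ --- so $\rho'$ is a run of a valid plan of $\Pi_{\ell_2}$. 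By the first hypothesis $\rho'$ visits $\ell_1$, and $\rho$ extends $\rho'$, so $\rho$ visits $\ell_1$; since $\rho$ was arbitrary, $\Pi_{\ell_1}<\Pi_{\ell_3}$. The routine obligation here is to verify that a prefix of a run of $HA$ (possibly with a truncated last timed transition) is again a run of $HA$ realizing a legitimate plan of length no greater than the original.

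Antisymmetry is the substantive step. Assume $\Pi_{\ell_1}<\Pi_{\ell_2}$ and $\Pi_{\ell_2}<\Pi_{\ell_1}$; I want $\ell_1=\ell_2$. Fix a run $\rho$ of a valid plan of $\Pi_{\ell_1}$ and let $p_1$ and $p_2$ be the positions along $\rho$ of the first visit to $\ell_1$ and to $\ell_2$ respectively --- $p_1$ exists because $\rho$ ends at $\ell_1$, and $p_2$ exists because $\Pi_{\ell_2}<\Pi_{\ell_1}$ forces $\rho$ to visit $\ell_2$. Truncating $\rho$ at $p_1$ gives a run of a valid plan of $\Pi_{\ell_1}$, which by $\Pi_{\ell_2}<\Pi_{\ell_1}$ must already contain a visit to $\ell_2$, so $p_2\le p_1$; symmetrically, truncating at $p_2$ gives a run of a valid plan of $\Pi_{\ell_2}$, which by $\Pi_{\ell_1}<\Pi_{\ell_2}$ must contain a visit to $\ell_1$, so $p_1\le p_2$. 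Hence $p_1=p_2$, and since a single position of a run carries a single location, $\ell_1=\ell_2$.

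I expect the main obstacle to be not this core argument but a degenerate case it silently excludes: if $\Pi_{\ell_1}$ (or $\Pi_{\ell_2}$) admits no valid plan at all, every sub-problem relation with it as the larger problem holds vacuously, there is no run $\rho$ to anchor the antisymmetry argument, and two distinct unreachable locations become $<$-equivalent. To make the statement literally correct I would either restrict $\Pi^*$ to locations reachable in $HA$ --- the only regime that matters for waypoint identification anyway --- or adopt the convention that identifies all such ``vacuous'' problems, and I would state this explicitly before the proof. The remaining work, namely the prefix/truncation bookkeeping and checking that the plan extracted from a truncated run is indeed valid under the paper's notion of validity, is routine.
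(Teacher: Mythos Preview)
Your approach is the same as the paper's --- verify reflexivity, antisymmetry, and transitivity directly from Definition~\ref{def:subproblem} --- but the paper's proof is essentially a bare assertion: it notes reflexivity from the definition and then states that ``transitivity and anti-symmetry also follow from the definition of sub-problem'' without supplying any argument. Your truncation-to-first-visit construction for transitivity and the first-visit position argument for antisymmetry are the details the paper omits, and they are correct. Your observation about the degenerate case --- two distinct locations $\ell_1\neq\ell_2$ with $\Pi_{\ell_1}$ and $\Pi_{\ell_2}$ both unsolvable are vacuously $<$-related in both directions, violating antisymmetry --- is a genuine gap that the paper does not acknowledge; your proposed remedies (restrict to reachable locations, or collapse such problems) are sensible and go beyond what the paper provides.
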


\begin{proof}
For $\langle \Pi^*, < \rangle$ to be poset, the binary relation $<$ on $\Pi^*$ should be reflexive, anti-symmetric, and transitive, that is, $<$ must be a partial order relation. From definition \ref{def:subproblem}, it is easy to see that every planning problem in $\Pi^*$ is a sub-problem to itself and hence reflexive. Transitivity and anti-symmetry also follow from the definition of sub-problem.
\end{proof}
\noindent \textbf{Chains} Recall that a subset $Y \subseteq \Pi^*$ of a partially ordered set $\langle \Pi^*, < \rangle$ is a \emph{chain} if $\forall \Pi_i, \Pi_j \in \Pi^* : (\Pi_i < \Pi_j) \lor (\Pi_j < \Pi_i)$. The length of a chain is the number of elements it has. A poset can have more than one chain of longest length.  

\begin{definition}
A chain of sub-problems of $\Pi$ is denoted by the set $Y^{<} \subseteq \Pi^*$ where every element of $Y^{<}$ is a sub-problem of $\Pi$ and $Y^{<}$ is a chain of $\langle \Pi^*, < \rangle$.
\end{definition}

\noindent We now describe the explanation artifact that we intend to generate with our explanation algorithm:

\begin{definition}
    Given an unsolvable planning problem $\Pi$,
    an explanation artifact $Explanation(\Pi)$ is a planning problem $\Pi_i \in \Pi^*$ such that:
    \begin{enumerate}
        \item[(i)] $\Pi_i \in Y^{<}$ for some $Y^{<}$ such that $\Pi_i$ is unsolvable.

        \item[(ii)] $\forall \Pi_j \in Y^{<}$ such that $\Pi_j < \Pi_i$ and $\Pi_j \neq \Pi_i$, $\Pi_j$ is solvable.
    \end{enumerate} 
\end{definition}

\noindent The explanation is thus the first unsolvable sub-problem of $\Pi$ in a chain of sub-problems in the poset $\langle \Pi^*, < \rangle$. As an illustration, assume that $Y^{<} = \{\Pi_1$, $\Pi_2$, $ \ldots$, $\Pi_n\}$ is a chain of sub-problems of $\Pi$ in $\langle \Pi^*,< \rangle$ having a total order as $\Pi_1 < \Pi_2 < \ldots < \Pi_n$, Explanation($\Pi$) is the $\Pi_i \in Y^<$ that is unsolvable where $\forall \Pi_j \in Y^<$ such that $\Pi_j < \Pi_i$, $\Pi_j$ is solvable. For instance, consider the planning problem of the motivating example. Our abstraction will render 25 planning problems in $\Pi^*$, each having one of the cell as the goal. An example of a chain of sub-problems is the chain consisting of 7 sub-problems shown as waypoints $w1-w7$ with a total order $w1 < w2 < \ldots < w7$  amongst them. Observe that this chain is also the longest possible chain of sub-problems in the poset. In this chain, the explanation generated will be the earliest $w_i$ that is unsolvable.

\noindent The goal of generating the proposed explanation artifact is to assist a human expert/control engineer in diagnosing the causes of unsolvability by \textbf{Localizing the earliest cause of unsolvability}. The detection of the earliest waypoint which is infeasible localizes the primitive cause of unsolvability in that sense. The intuition behind finding a chain of sub-problems is to have a causal analysis of the unsolvability of the planning problem.

\noindent We now present the algorithm to find Explanation($\Pi$) in the following section. We first show a reduction from finding a chain of sub-problems to finding a longest-common-subsequence of finitely many strings.

\subsection{Reduction to Longest Common Subsequence (LCS) Problem}
The computation of Explanation($\Pi$) first requires finding a chain of sub-problems of $\Pi$ in $ \langle \Pi^*, <\rangle$. We now show a reduction of this problem to the problem of finding a LCS of finitely many strings. Recall that \emph{reduction} is a way of converting one problem into another problem such that the solution of the second problem can be used to solve the first problem.  To present the reduction to LCS, we need the following definition of the graph of a hybrid automaton.

\begin{definition}
The graph of a hybrid automaton HA
is defined as $\mathcal{G_{HA}} = (V$, $E)$, where $V$ = \emph{Loc} and $E \subseteq \emph{Loc} \times \emph{Loc}$ such that for every $(l, a, g, r, l') \in \emph{Edge}$, there is an edge $(l, l') \in E$. $\Box$
\end{definition}

\begin{definition} \label{def:path}
    A path $p$ between locations $l_0$ and $l_n$ in $\mathcal{G_{HA}}$ is a sequence of $locations$ and $edges$ given as:
    \begin{align*}
    l_0 \xrightarrow{e_0} l_1 \xrightarrow{e_1} l_2 \xrightarrow{e_2} \ldots \xrightarrow{e_{n-1}} l_{n}
    \end{align*}
    where $l_i$ $\in$ $Loc$, $e_i$ $\in$ $Edge$, and $l_{i}$, $l_{i+1}$ are the source and destination of $e_i$ respectively. The length of a path is the number of edges it contains. 
    $\Box$
\end{definition}

\noindent We represent a path in a graph as a string $ps$ of the location sequence while eliminating the edges. For example, a path $l_0 \xrightarrow{e_0} l_1 \xrightarrow{e_1} l_2 \xrightarrow{e_2} \ldots \xrightarrow{e_{n-1}} l_{n}$ 
is represented as a string "$l_0 l_1 l_2 \ldots l_n$". Now, for the given unsolvable problem $\Pi$ =  (\emph{HA}, $\langle Init, Goal \rangle, Depth)$, we can compute all paths of length less than or equal to $Depth$ between $l_{0}$ and $l_{goal}$, the initial and the goal location in $Init$ and $Goal$ respectively. Since we are interested in paths of bounded length, there will be finitely many such paths. The string representations of all such paths is denoted by the set $PS(\Pi)$. 

The graph of the hybrid automaton provides a higher abstraction of the domain in the sense that if there is no path from $l_0$ to $l_{goal}$ in $\mathcal{G_{HA}}$, then there cannot be any valid run of a plan from $Init$ to $Goal$ and hence the planning problem is unsolvable. We may then identify the cause of unsolvability to be in the discrete dynamics, oblivious to the continuous dynamics of the domain. More importantly, as we shall see now, a chain of sub-problems can be identified from the longest common subsequence of the strings in $PS(\Pi)$. Finding a \emph{longest common subsequence} (LCS) between strings is a classic computer science problem. An LCS measures the closeness of two or more strings by finding the maximum number of identical symbols in them in the same order \cite{DBLP:conf/spire/BergrothHR00,DBLP:journals/jacm/Maier78,princeton1974finding}. Recall that a subsequence is different from a substring which additionally requires that the common symbols present in the strings are without gaps.  We now present the main result of the paper.

\begin{proposition}
Given a planning problem $\Pi$, computing a chain of sub-problems $Y^<$ in the poset $\langle \Pi^*, < \rangle$ can be reduced to computing a longest common subsequence of $PS(\Pi)$.     
\end{proposition}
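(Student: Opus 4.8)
The plan is to establish the reduction in both directions: (i) show that a longest common subsequence (LCS) of the string set $PS(\Pi)$ yields a chain of sub-problems of $\Pi$, and (ii) argue that the chain so obtained corresponds to a genuine chain in the poset $\langle \Pi^*, < \rangle$, so that computing the LCS is enough to recover $Y^<$. First I would fix notation: let $\sigma = \ell_{i_1}\ell_{i_2}\cdots\ell_{i_k}$ be any common subsequence of all strings in $PS(\Pi)$, where each $\ell_{i_j} \in Loc$. The key claim is that each location $\ell_{i_j}$ appearing in $\sigma$ induces a sub-problem $\Pi_{\ell_{i_j}} \in \Pi^*$ (in the sense of Definition~\ref{abstractPi}), and that these sub-problems form a chain ordered consistently with their position in $\sigma$.

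The heart of the argument is the following observation. Suppose $\Pi$ admits a valid plan $P$; by Definition~\ref{def:run} its run traces out a location sequence $l_0 l_1 \cdots l_n$ with $l_0$ the initial location and $l_n = l_{goal}$, and this sequence, after collapsing any stutter from timed transitions, is a path in $\mathcal{G_{HA}}$ from $l_0$ to $l_{goal}$ of length at most $Depth$. Hence its string representation belongs to $PS(\Pi)$. Since $\sigma$ is a common subsequence of \emph{every} string in $PS(\Pi)$, it is in particular a subsequence of this run's location string, so every location $\ell_{i_j}$ of $\sigma$ is visited by the run — i.e. the run intersects $Goal_{\ell_{i_j}} = \langle \ell_{i_j}, Inv(\ell_{i_j})\rangle$. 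Because this holds for an arbitrary valid plan, condition (c) of Definition~\ref{def:subproblem} is met for each $\Pi_{\ell_{i_j}}$ (conditions (a), (b) and $Init_i = Init_j$ hold by construction in Definition~\ref{abstractPi}), so $\Pi_{\ell_{i_j}} < \Pi$, i.e. $\Pi_{\ell_{i_j}} \in \Pi^{\sqcup}$, and in fact $\Pi_{\ell_{i_j}} \in \Pi^*$. For the ordering: since the symbols of $\sigma$ appear in that relative order within every run string, any valid run visits $\ell_{i_1}$ before $\ell_{i_2}$ before $\cdots$, so every valid run of $\Pi_{\ell_{i_{j+1}}}$ (which is also a valid run of $\Pi$, the domains and depths being equal) must have already intersected $Goal_{\ell_{i_j}}$; this gives $\Pi_{\ell_{i_j}} < \Pi_{\ell_{i_{j+1}}}$, and so $Y^< = \{\Pi_{\ell_{i_1}}, \ldots, \Pi_{\ell_{i_k}}\}$ is totally ordered, hence a chain. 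Taking $\sigma$ to be a \emph{longest} common subsequence yields the longest chain obtainable this way, and the reduction simply post-processes the LCS output by emitting the corresponding elements of $\Pi^*$.

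I would also note the subtlety that makes the direction "chain $\Rightarrow$ LCS" hold: if $\Pi$ is unsolvable there may be no runs at all, in which case the condition "every run of a valid plan intersects $Goal_i$" is vacuously true and $\Pi^* = \Pi^{\sqcup}$; the reduction is still meaningful because $PS(\Pi)$ is computed purely from $\mathcal{G_{HA}}$ (all graph paths of length $\le Depth$ from $l_0$ to $l_{goal}$), which captures exactly the location sequences that \emph{could} underlie a run, so reasoning at the graph level is sound and the computed chain is precisely the one along which we later run reachability analysis. I would make explicit that a location appearing twice in $\sigma$ collapses to a single element of $\Pi^*$ but this does not disturb the chain structure, since $<$ is reflexive.

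The main obstacle I expect is the careful handling of the correspondence between runs of the hybrid automaton and paths in $\mathcal{G_{HA}}$ — specifically, arguing that every valid run projects to a graph path of length $\le Depth$ (so its string is in $PS(\Pi)$) and, conversely, that $PS(\Pi)$ does not contain "spurious" strings that would over-constrain the common subsequence. The first is essentially immediate from Definition~\ref{def:run} together with the length-versus-$Depth$ condition in the definition of a valid plan; the second does not actually threaten correctness of the reduction, because including extra path strings in $PS(\Pi)$ can only shrink the common subsequence, and we only need that the LCS-derived set is genuinely a chain of sub-problems (soundness), not that it is the unique or maximum-length chain in the poset. I would therefore state the proposition and its proof emphasizing soundness of the reduction, and remark separately on the degree to which the LCS captures the \emph{longest} chain, flagging that the converse inclusion is where one must be careful about which paths are realizable.
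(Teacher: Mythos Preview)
Your proposal follows the paper's argument: take an LCS $\sigma$ of $PS(\Pi)$, observe that every valid run of $\Pi$ projects to a bounded-length path in $\mathcal{G}_{HA}$ and hence to a string in $PS(\Pi)$, deduce that every such run visits each location of $\sigma$ in order, and conclude that the corresponding $\Pi_{\ell}$'s form a chain of sub-problems. This is exactly the paper's (very terse) proof, with the run-to-path correspondence and the edge cases (vacuous truth when $\Pi$ is unsolvable, repeated symbols, spurious graph paths) made explicit---all of which is welcome detail the paper omits.

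There is one slip in your ordering argument. To establish $\Pi_{\ell_{i_j}} < \Pi_{\ell_{i_{j+1}}}$ you assert that ``every valid run of $\Pi_{\ell_{i_{j+1}}}$ \dots\ is also a valid run of $\Pi$, the domains and depths being equal''. That parenthetical is false: the goals differ, and a run that reaches $\ell_{i_{j+1}}$ need not extend to $l_{goal}$ within $Depth$, so its location string need not belong to $PS(\Pi)$ at all, and the LCS of $PS(\Pi)$ gives you no direct control over it. Concretely, there can be a short path from $l_0$ to $\ell_{i_{j+1}}$ avoiding $\ell_{i_j}$ whose every extension to $l_{goal}$ exceeds $Depth$; then $\Pi_{\ell_{i_j}} \not< \Pi_{\ell_{i_{j+1}}}$ even though both appear in the LCS. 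The paper's own proof simply asserts the chain relation at this point without justification, so your write-up is at the same level of rigor as the paper; but the specific reason you give is incorrect and should be removed or replaced.
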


\begin{figure*}[htbp]
    \centering
    \includegraphics[width= 0.6\textwidth]{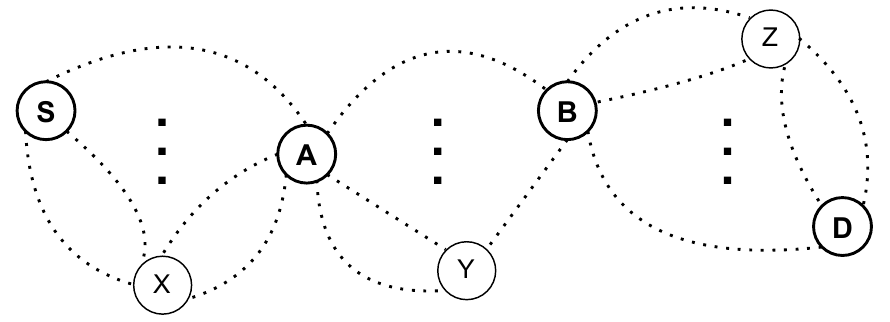}
    \caption{A depiction of a graph of a \emph{HA}. Let S and D be the source and the goal locations in a planning problem $\Pi$. Every path from S to D visits locations A and B in sequence. Thus, S-A-B-D is the LCS of paths in PS($\Pi$). Clearly, if a valid run of a plan exists in the \emph{HA}, the run must visit the invariants of S, A, B, and D sequentially.}
    \label{fig:chain-example}
\end{figure*}

\begin{proof}
$\mathcal{G_{HA}}$ = ($V$, $E$) is implicitly present in \emph{HA} of $\Pi$. Any graph search algorithm such as breadth-first search can compute paths in $PS(\Pi)$. Let  "$l_0 l_i l_j \ldots l_n$" be a LCS of path strings in $PS(\Pi)$. Being a common subsequence, every path from $l_0$ to $l_n$ visits these nodes in sequence. This implies that every valid run of $\Pi$ must visit the invariant of these locations in sequence, that is: $\Pi_{l_0} < \Pi_{l_i} < \Pi_{l_j} < \ldots < \Pi_{l_n}$. $\therefore Y^< = \{ \Pi_{l_0}, \Pi_{l_i}, \Pi_{l_j}, \ldots , \Pi_{n} \}$ is a chain in $\langle \Pi^*, < \rangle$. Figure \ref{fig:chain-example} shows a sketch of the proof idea.


\end{proof}



\begin{proposition}
Given a planning problem $\Pi$, the length of LCS of $PS(\Pi)$ is bounded by $|\Pi^*|$, the number of locations in HA of $\Pi$. \label{prop:lcs-length}
\end{proposition}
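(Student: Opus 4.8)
The plan is to observe that the LCS of $PS(\Pi)$ is itself a string over the alphabet $Loc$, and to bound its length by counting how many distinct symbols a subsequence of a simple path can contain. First I would recall that each string in $PS(\Pi)$ is the location-sequence of a \emph{path} in $\mathcal{G_{HA}}$ in the sense of Definition~\ref{def:path}. A longest common subsequence $s = $ ``$l_0 l_i l_j \ldots l_n$'' of all strings in $PS(\Pi)$ is in particular a subsequence of each individual path string. The key step is to argue that $s$ cannot repeat any location: if a location $\ell$ appeared twice in $s$, then $\ell$ would have to appear at least twice, in the same relative order, in \emph{every} path string in $PS(\Pi)$ — but one can always exhibit a path from $l_0$ to $l_{goal}$ in which $\ell$ occurs at most once (e.g.\ a shortest such path, which is simple and hence visits every location at most once), provided $PS(\Pi)$ is nonempty. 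Hence no such repeated symbol can occur in a common subsequence, so $s$ consists of pairwise-distinct locations.

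From there the bound is immediate: the number of distinct locations is $|Loc|$, which by Definition~\ref{abstractPi} equals $|\Pi^*|$ (the cardinality of $\Pi^*$ is the cardinality of $Loc$, as noted after that definition). Therefore the length of $s$, being the number of symbols in it and all symbols being distinct elements of $Loc$, is at most $|Loc| = |\Pi^*|$. I would also note the boundary case: if $PS(\Pi)$ is empty (no path from $l_0$ to $l_{goal}$ within $Depth$), the LCS is the empty string of length $0$, which still satisfies the bound; and the reduction in the preceding proposition already tells us the LCS yields a chain $Y^<$ in $\langle \Pi^*, <\rangle$, so the statement can equivalently be read as: every chain of sub-problems produced this way has length at most $|\Pi^*|$, which is consistent since a chain in a finite poset on $\Pi^*$ trivially has at most $|\Pi^*|$ elements.

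The main obstacle I anticipate is making the ``no repeated symbol'' argument fully rigorous, since a general path in $\mathcal{G_{HA}}$ (Definition~\ref{def:path}) is a walk and \emph{may} revisit locations, so a common subsequence could a priori exploit such repetitions. The clean way around this is to use the existence of a simple path witness: whenever $l_{goal}$ is reachable from $l_0$ within the depth bound, there is a simple path from $l_0$ to $l_{goal}$ (remove cycles from any path; this only shortens it, so it stays within $Depth$), and its string contains each location at most once. Since the LCS must be a subsequence of \emph{that} particular string too, it inherits the distinctness property. An even softer argument avoids the graph entirely: by the previous proposition the LCS induces a chain $Y^<$ in the poset $\langle \Pi^*, <\rangle$, and distinct symbols of the LCS correspond to distinct elements $\Pi_\ell$ of $\Pi^*$, so the length is bounded by $|\Pi^*|$. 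Either route closes the argument; I would present the simple-path version as the primary one and remark on the poset view as a sanity check.
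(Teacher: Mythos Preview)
Your proposal is correct and follows essentially the same approach as the paper: both arguments hinge on the observation that $PS(\Pi)$ contains a cycle-free (simple) path from $l_0$ to $l_{goal}$, whose string has at most $|\Pi^*|$ symbols, and the LCS must be a subsequence of that string. The paper phrases the final step more directly---``the LCS cannot be longer than the shortest string in $PS(\Pi)$''---whereas you route through a ``no repeated symbols'' argument, but the underlying idea is identical.
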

\begin{proof}
Every path in $PS(\Pi)$ longer than $|\Pi^*|$ will have cycles. The corresponding cycle free path will also be in $PS(\Pi)$ of length bounded by $|\Pi^*|$. Since the LCS cannot be longer than the shortest string in $PS(\Pi)$, its length is bounded by $|\Pi^*|$.  
\end{proof}

\noindent \textbf{Discussion:} If $PS(\Pi)$ is empty, our explanation algorithm terminates and reports that the planning problem is unsolvable due to the discrete dynamics, since there is no path between the initial and the goal locations. 

Observe that there is a connection between \emph{articulation points} or \emph{cut-vertices} of $\mathcal{G_{HA}}$ and the locations in an LCS of $PS(\Pi)$. An articulation point is a vertex of a graph removal of which along with its incident edges results in increasing the connected components in the graph. One can argue that every articulation point of $\mathcal{G_{HA}}$ whose removal results in distinct components such that one contains $l_0$ and the other other contains $l_{goal}$, will be a member of the LCS. This is because, every path from $l_0$ to $l_{goal}$ must contain such articulation points and therefore will be captured in the LCS. Let us call such articulation points as \emph{disconnecting articulation points} in the sense that their removal disconnects $l_0$ and $l_{goal}$. Note that every vertex in an LCS need not be such an articulation point of $\mathcal{G_{HA}}$. This is because a LCS is computed over paths in $PS(\Pi)$ which contains only paths of length bounded by a depth specified in the problem instance. There may be paths in $\mathcal{G_{HA}}$ of longer length which does not pass through one or more vertices in the LCS. Such vertices in the LCS cannot be disconnecting articulation points whose removal disconnects $l_0$ and $l_{goal}$. Consequently, if our algorithm finds a trivial LCS string of length two, which is $l_0-l_{goal}$, then $\mathcal{G_{HA}}$ has no disconnecting articulation point. Such an LCS is trivial because any valid plan of course must visit the invariant of $l_0$ followed by the invariant of $l_{goal}$. Therefore, Y$^{k}$ = $\{\Pi_{l_0}, \Pi_{l_{goal}} \}$ is always a chain of inevitable sub-problems for any planning problem $\Pi$. Although, finding a trivial LCS does not mean there are no articulation points in $\mathcal{G_{HA}}$. All articulations points in $\mathcal{G_{HA}}$ can be computed in polynomial time, but computing Explanation($\Pi$) additionally requires finding \emph{disconnecting articulation points} and an ordering on them based on the sub-problem relation. Thus, the polynomial time algorithm does not suffice.   


\noindent LCS of strings can be computed using the standard dynamic programming paradigm \cite{DBLP:conf/spire/BergrothHR00}. For instance, in the motivating example of planetary rover domain, the longest common sequence for the strings of paths of length bounded by 10 from $l_{11}$ to $l_{25}$ is $l_{11}$-$l_6$-$l_1$-$l_2$-$l_3$-$l_8$-$l_{13}$-$l_{14}$-$l_{25}$. In the next section, we show the computation of Explanation($\Pi$).

\subsection{Explanation Generation by Reachability Analysis}
The computed LCS is converted to sub-problems from the locations in the LCS. For each location $l$ in the LCS, we construct a sub-problem $\Pi_{l}$ (recall definition \ref{abstractPi}). Thus, we have the chain $Y^< = \{ \Pi_{l_0}, \Pi_{l_i}, \Pi_{l_j}, \ldots , \Pi_{l_n} \}$ for an LCS say "$l_0 l_i l_j \ldots l_n$". We verify the solvability of the sub-problems in $Y^<$ (inevitable waypoints) using bounded reachability analysis on the hybrid automaton domain by reintroducing the continuous dynamics (by reintroducing the location invariants, flow, transition guards, and resets). Given a hybrid automaton, a set of initial and goal states, and a bound of analysis say $d$, bounded reachability analysis is the method of computationally deciding whether any goal state is reachable from any initial state by a run of the automaton of length bounded by $d$. Therefore, bounded reachability of the goal states from initial states of a planning problem bounded by $Depth$, implies the existence of a run of a valid plan, which in turn implies the solvability of the given planning problem. In contrary, unreachability of the goal states implies the non-existence of any valid run and hence absence of a plan. As our planning problem under analysis is unsolvable, one or more of the waypoints in $Y^<$ must be unreachable. The reachability analysis of the waypoints is performed in the order in which they appear in the chain $Y^<$. If the sub-problem $\Pi_i$ is found to be  reachable, we proceed to check the reachability of the next sub-problem $\Pi_{i+1}$ in the chain. The first $\Pi_i$ which is unreachable is returned as $Explanation(\Pi)$, implying that it is the first unreachable sub-problem/waypoint in a chain of sub-problems/waypoints.



We use a bounded reachability analysis tool \textsc{Bach} \cite{DBLP:conf/fmcad/BuLWL08} for reachability analysis of the waypoints. \textsc{Bach} can analyse linear hybrid automata (LHA) \cite{DBLP:conf/lics/Henzinger96,DBLP:journals/entcs/LiAB07} and reports a reachability problem instance as \emph{satisfiable} when a run exists from an initial state to a goal state in the corresponding hybrid automaton for the planning problem of length bounded by a given depth, deciding the solvability of the corresponding planning problem. Otherwise, \textsc{Bach} reports the instance as \emph{unsatisfiable}, when no such run exists. Algorithm~\ref{algo:reach} takes a chain of sub-problems $Y^<$ as input. 
It returns the first unreachable sub-problem in the chain as an explanation of unsolvability. 

\begin{algorithm}[]
\SetKwInOut{Input}{input}\SetKwInOut{Output}{output}
\Input{A chain of sub-problems $Y^<$ = $\langle \Pi_{l_0}, \Pi_{l_i}, \Pi_{l_j}, \ldots , \Pi_{l_n} \rangle$}
\Output{First unreachable sub-problem $\Pi_{l_k}$ in the chain.}
\DontPrintSemicolon
\caption{\textsf{Generating Explanation($\Pi$) using Reachability Analysis}}
\label{algo:reach}
\BlankLine
\LinesNumbered
  {$n \gets$ $length(Y^<)$ \small{\tcc*{No. of sub-problems in $Y^<$}} }
  \For{$k \gets 0$ to $n$} {
     {$\mathcal{R} \gets$ \textsc{reachabilityProblem}(\emph{HA}, $Init$, $Goal_{l_k}$, $Depth$), where $Goal_{l_k}$ is the goal of $\Pi_{l_k}$ \small{\tcc*{Reduced to reachability problem}} }
     {$result \gets$ \textsc{BACH} ($\mathcal{R}$) \small{\tcc*{Use BACH}} }
     \eIf{$result$ == $\textit{SAT}$}
        {\textbf{continue} \small{\tcc*{When sub-problem $\Pi_{l_k}$ is reachable.}}}
        {\KwRet{$\Pi_{l_k}$} \small{\tcc*{Returns first unreachable sub-problem as the explanation.}}}
  }
\end{algorithm}





\subsection{Complexity Analysis}

In a \emph{HA} with $n$ locations and given a source and destination location, the worst-case complexity of computing all paths in $\mathcal{G}_{HA}$ of length at most $d$ from the given source to the destination is $O\big(n^d\big)$. 
The complexity of computing $LCS$ of strings corresponding to the paths (say $m$ many) of maximum length $d$ using standard dynamic programming is $O\big(d^m\big)$. It is known to be an NP-Hard problem \cite{DBLP:journals/asc/DjukanovicRB20,DBLP:journals/jacm/Maier78}. Although finding the inevitable way-points with the proposed algorithm turns out to be inefficient asymptotically, for problem instances of small size (\emph{HA} with a few locations and for a small depth d), we show empirically that our algorithm can generate inevitable way-points and the explanation artifact efficiently.



\paragraph{Complexity of Reachability Analysis :} The model checker \textsc{Bach} performs a path-oriented reachability analysis of a linear hybrid automaton. An initial to goal location path is encoded into a set of linear constraints, and consequently solved using linear programming problem solver. Details of the path encoding can be found in \cite{DBLP:conf/fmcad/BuLWL08,DBLP:conf/memocode/SarwarRB23}. Though practical LP solvers use variants of the Simplex algorithm, which runs efficiently on most practical problems, it is not a polynomial time algorithm in general. In theory, linear programming has been shown to be in class P \cite{khachiyan1979polynomial,KHACHIYAN198053}. Our algorithm calls \textsc{Bach} for each sub-problem in the computed chain. As the length of the computed chain, the LCS, is bounded by the number of locations of \emph{HA} (proposition \ref{prop:lcs-length}), the complexity is $|\Pi^*|$ times the complexity of LP solving. 

\section{Results and Implementation} \label{eval}
In this section, we present the performance of our framework on several unsolvable hybrid planning problem instances.

\subsection{Experimental Setup}

\paragraph{Benchmarks}: A brief description of the planning domains is given as follows: 
\textbf{Planetary rover domain} is presented in Section~\ref{motive} and a pictorial overview is shown in Figure~\ref{fig:rover-domain}.
The planning task for the rover is to reach the base station from its initial location after collecting soil and rock samples from the designated sites. \textbf{City route-network domain} presents a route-network of a city where important places are given as junctions in the network. The planning problem for a battery-powered car is to navigate through the city's route network to reach its destination. \textbf{Warehouse automation domain}~\cite{DBLP:conf/memocode/SarwarRB23} represents a scenario where a robot operates to manage the inventories of a warehouse. The floor map of the warehouse is given as grid cells. Few cells in the warehouse are blocked, whereas, on a few cells, the robot depletes more energy due to the condition of the surface such as a oil spillage or being bumpy. The planning problem is for the robot to carry a consignment from its initial location to a goal location. We have crafted warehouse scenarios of varying grid dimensions for evaluating our algorithm. \textbf{Water-level monitor}~\cite{DBLP:conf/fmcad/BuLWL08} represents a system that controls the water level in a reservoir. The system goes into an unsafe state if the water level in the reservoir meets underflow or overflow conditions. The planning task is to drive the system to an unsafe state from a given initial state. \textbf{NAV}~\cite{ARCH-COMP24:ARCH_COMP24_Category_Report} models the motion of a point robot in a $2$-dimensional plane, partitioned into $3^2$ rectangular regions, and each such region is associated with a vector field described by the flow equations. The planning problem is to find a trajectory from an initial state to a goal state. \textbf{NRS}~\cite{F-Wang-NRS, ARCH-COMP24:ARCH_COMP24_Category_Report} represents a nuclear reactor system consisting of 2 rods that absorb neutrons from heavy water when inserted, and a controller that schedules the insertion of the rods into the heavy water. 
The system is considered safe if there is exactly one rod absorbing neutrons in the heavy water at any instant of time. The planning problem is to find an unsafe execution of the system from a given initial state. A detailed description of the city route and warehouse automation domains is given in the Appendix. 

We constructed the \emph{Planetary rover} and the \emph{City route-network} domains for evaluating our algorithm. The \emph{Warehouse automation} domain is taken from \cite{DBLP:conf/memocode/SarwarRB23}.
The rest of the domains are from verification problem instances in linear hybrid systems. For instance, the \emph{Water-level monitor} domain is a benchmark taken from \cite{DBLP:conf/fmcad/BuLWL08}, whereas \emph{NAV} and \emph{NRS} are benchmarks taken from \textsc{Arch-comp 24 pcdb} category ~\cite{ARCH-COMP24:ARCH_COMP24_Category_Report}.
In these domains, we pose the safety property verification problem as planning problem instances. The planning problems taken for evaluation are all known to be unsolvable.


\paragraph{Implementation}: All experiments are performed on a machine with 8 GB RAM, Intel Core i5-8250U@1.60GHz, and 8 core processor with Ubuntu 18.04 64-bit OS. All benchmark domains, the problem files, and the code base can be found at: \url{https://gitlab.com/Sazwar/Sub-goal-Construction}.

\subsection{Evaluation}

\begin{table*}[]
  \centering\tiny
  \resizebox{\textwidth}{!}{
    \begin{tabular}{|c|c|c|c|c|c|c|c|c|c|c|}
    \hline
         \multicolumn{2}{|c|}{\multirow{2}{*}{Benchmarks}} & \multirow{2}{*}{\#Locs} & \multirow{2}{*}{\#Trans} & \multirow{2}{*}{Depth} & \multirow{2}{*}{$\mid$PS($\Pi$)$\mid$}  & \multirow{2}{*}{$\lvert\text{Y}^<\rvert$}   & \#Feas. & \multirow{2}{*}{Exp($\Pi$)} & Time & Memory \\
         \multicolumn{2}{|c|}{} & &  & & & & wps & & (in sec) & (in MB) \\

    \hline
           \multicolumn{2}{|c|}{Planetary} & \multirow{2}{*}{25} & \multirow{2}{*}{40} & 12 & 3 & \multirow{2}{*}{9} & \multirow{2}{*}{6} & \multirow{2}{*}{Loc13} & 0.50 & 7.5 \\\cline{5-6}\cline{10-11}
           \multicolumn{2}{|c|}{rover (PR)} & & & 20 & 34361 & & & & 3.81 & 471.6 \\
    \hline 
           \multicolumn{2}{|c|}{City} & \multirow{2}{*}{10} & \multirow{2}{*}{25} & 10 & 468 & \multirow{2}{*}{4} & \multirow{2}{*}{2} & \multirow{2}{*}{Loc7} & 0.73 & 10.1 \\\cline{5-6}\cline{10-11}
           \multicolumn{2}{|c|}{route (CR)} & & & 15 & 92172 & & & & 5.93 & 1075.8 \\
         
    \hline 
           & \multirow{2}{*}{6x4} & \multirow{2}{*}{24} & \multirow{2}{*}{50} & 10 & 36 & \multirow{2}{*}{6} & \multirow{2}{*}{4} & \multirow{2}{*}{Loc17} & 1.57 & 8.8 \\\cline{5-6}\cline{10-11}
           & & & & 15 & 40998 & & & & 4.17 & 507.5 \\\cline{2-11}
           
           \multirow{4}{*}{Warehouse} & \multirow{2}{*}{6x6} & \multirow{2}{*}{36} & \multirow{2}{*}{78} & 12 & 12 & 8 & \multirow{2}{*}{4} & \multirow{2}{*}{Loc28} & 0.54 & 9.2 \\\cline{5-7}\cline{10-11}
           & & & & 17 & 5816 & 6 &  & & 4.47 & 884.9 \\\cline{2-11}
           
           \multirow{2}{*}{automation (WA)} & \multirow{2}{*}{8x8} & \multirow{2}{*}{64} & \multirow{2}{*}{100} & 12 & 16 & 9 & 7 & \multirow{2}{*}{Loc41} & 1.37 & 17.5 \\\cline{5-8}\cline{10-11}
           & & & & 17 & 10214 & 3 & 1 & & 8.62 & 1454.2 \\\cline{2-11}

           & \multirow{2}{*}{10x10} & \multirow{2}{*}{100} & \multirow{2}{*}{178} & 12 & 2 & 11 & 7 & \multirow{2}{*}{Loc57} & 4.13 & 133.9 \\\cline{5-8}\cline{10-11}
           & & & & 15 & 78 & 8 & 4 & & 25.14 & 1445.3 \\
    \hline
           \multicolumn{2}{|c|}{Water-level} & \multirow{2}{*}{6} & \multirow{2}{*}{6} & 20 & 5 & \multirow{2}{*}{3} & \multirow{2}{*}{2} & \multirow{2}{*}{Loc6} & 0.05 & 5.7 \\\cline{5-6}\cline{10-11}
           \multicolumn{2}{|c|}{monitor (WLM)} & & & 50 & 12 & & & & 0.05 & 5.7 \\
    \hline
          \multicolumn{2}{|c|}{\multirow{2}{*}{NAV}} & \multirow{2}{*}{9} & \multirow{2}{*}{24} & 10 & 2325 & \multirow{2}{*}{2} & \multirow{2}{*}{1} & \multirow{2}{*}{Loc6} & 0.39 & 9.5 \\\cline{5-6}\cline{10-11}
          \multicolumn{2}{|c|}{} & & & 15 & 149733 & & & & 3.41 & 773.2 \\
    \hline   
          \multicolumn{2}{|c|}{\multirow{2}{*}{NRS}} & \multirow{2}{*}{27} & \multirow{2}{*}{30} & 15 & 312 & \multirow{2}{*}{2} & \multirow{2}{*}{1} & \multirow{2}{*}{Loc25} & 0.03 & 7.6 \\\cline{5-6}\cline{10-11}
          \multicolumn{2}{|c|}{} & & & 20 & 7812 &  &  & & 0.14 & 22.1 \\
    \hline
    
    \end{tabular}
  }
  \caption{Explanation generation on unsolvable planning problem instances.}
  \label{table:table1}
\end{table*}

Table~\ref{table:table1} shows results of our framework on several hybrid systems benchmark domains. Each of the domains is presented with an unsolvable planning problem instance with varying bounds on the plan depth to test the scalability of the framework. \emph{Benchmark} represents the planning domains together with the planning problem,
while \emph{\#Loc} and \emph{\#Trans} report the number of locations and edges in the hybrid automaton of the domain respectively, showing the size of the domain.
\emph{Depth} presents the bound on the plan length.
\emph{$\mid$PS($\Pi$)$\mid$} specifies the number of path strings corresponding to the paths from the initial to the goal location of the planning problem instance in the abstract graph structure of the HA domain. Recall that we look at all these paths while computing the longest common location sub-sequence which gives us the inevitable waypoints in the chain of sub-problems in Y$^<$.
\emph{$\mid$Y$^<$$\mid$} gives us the chain length, which emphasizes the number of inevitable sub-problems detected by our framework, and \emph{\#Feas. wps} denotes the number of solvable sub-problems/waypoint(s) in the chain.
\emph{Exp($\Pi$)} presents the first unsolvable sub-problem in Y$^<$, and thereby, the first infeasible waypoint for the planning problem. A location $loc$ in the Exp($\Pi$) column represents the first sub-problem $\Pi_{loc}$ in the chain Y$^<$ that is unsolvable. For example, \emph{Loc13} corresponding to the entry of \emph{Planetary rover domain} reports the sub-problem $\Pi_{loc13}$ as the explanation of unsolvability of the problem instance. \emph{Time} and \emph{Memory} report the corresponding execution time and memory usage incurred by our framework.

\begin{table}[]
  \centering
    \begin{tabular}{|c|c|c|c|c|c|c|}
    \hline
        \multicolumn{2}{|c|}{\multirow{2}{*}{Benchmarks}} & \multirow{2}{*}{Depth} & \multicolumn{3}{c|}{Time (in secs)}  & \multirow{2}{*}{AT}\\\cline{4-6}
        \multicolumn{2}{|c|}{} & & (a) PE & (b) Finding Y$^<$ & (c) RA & \\

    \hline
           \multicolumn{2}{|c|}{\multirow{2}{*}{PR}} & 12 & 0.01 & 0.01 & 0.48 & 0.50 \\\cline{3-7}
           \multicolumn{2}{|c|}{} & 20 & 2.20 & 1.01 & 0.60 & 3.81\\
    \hline 
           \multicolumn{2}{|c|}{\multirow{2}{*}{CR}} & 10 & 0.03 & 0.01 & 0.69 & 0.73 \\\cline{3-7}
           \multicolumn{2}{|c|}{} & 15 & 4.72 & 0.23 & 0.98 & 5.93 \\
    \hline 
           \multirow{8}{*}{WA} & \multirow{2}{*}{6x4} & 10 & 0.01 & 0.01 & 1.54 & 1.57 \\\cline{3-7}
           & & 15 & 2.29 & 0.16 & 1.72 & 4.17 \\\cline{2-7}
           
           & \multirow{2}{*}{6x6} & 12 & 0.01 & 0.01 & 0.51 & 0.54 \\\cline{3-7}
           & & 17 & 3.67 & 0.04 & 0.76 & 4.47 \\\cline{2-7}
           
           & \multirow{2}{*}{8x8} & 12 & 0.05 & 0.01 & 1.31 & 1.37 \\\cline{3-7}
           & & 17 & 7.53 & 0.03 & 1.06 & 8.62 \\\cline{2-7}
           
           & \multirow{2}{*}{10x10} & 12 & 0.52 & 0.02 & 3.59 & 4.13 \\\cline{3-7}
           & & 15 & 21.68 & 0.02 & 3.44 & 25.14 \\
    \hline
           \multicolumn{2}{|c|}{\multirow{2}{*}{WLM}} & 20 & 0.01 & 0.01 & 0.03 & 0.05 \\\cline{3-7}
           \multicolumn{2}{|c|}{} & 50 & 0.01 & 0.01 & 0.03 & 0.05 \\
    \hline
           \multicolumn{2}{|c|}{\multirow{2}{*}{NAV}} & 10 & 0.01 & 0.01 & 0.37 & 0.39 \\\cline{3-7}
            \multicolumn{2}{|c|}{} & 15 & 2.78 & 0.21 & 0.42 & 3.41 \\
    \hline   
           \multicolumn{2}{|c|}{\multirow{2}{*}{NRS}} & 15 & 0.01 & 0.01 & 0.01 & 0.03 \\\cline{3-7}
            \multicolumn{2}{|c|}{} & 20 & 0.12 & 0.01 & 0.01 & 0.14 \\
    \hline
    
    \end{tabular}
  \caption{Performance analysis of our framework. \emph{PE} indicates Path-exploration time, \emph{RA} indicates Reachability analysis time, and \emph{AT} indicates Accumulative Time = (a) + (b) + (c).}
  \label{table:table2}
\end{table}
Table~\ref{table:table2} presents a detailed diagnosis of the execution time taken for explanation generation, showing the time taken for computing all initial to goal paths (PS), Computing a chain of inevitable sub-problems (Y$^<$), and reachability analysis to find the first unsolvable planning problem in the chain (Explanation($\Pi$)).

\subsection{Analysis of results}
Table~\ref{table:table1} shows that our framework identifies a chain of inevitable waypoints and an explanation of unsolvability efficiently. Performance degrades with an increase in the depth bound of the planning problem instance. This is clearly because increasing depth results in an exponential increase in the number of initial to goal paths which also increases the time to compute LCS of path strings. In NAV and NRS, our algorithm reports the trivial chain of waypoints which is visiting the initial location followed by visiting the goal location as inevitable. Note that this is because the graph of the these domain does not have any disconnecting articulation point (refer to the discussion section). 
Memory usage exceeds 500 MB in a few instances. 
This is because of a \emph{bfs} (breadth-first search) based path exploration where the size of the bfs queue increases exponentially at each level due to branching factor. Table~\ref{table:table2} shows the performance of the three major components of the algorithm. 
The path-exploration time and reachability analysis by the bounded model checker dominates the overall time taken by the algorithm. The results emphasize that it can quickly identify the sub-problems for a planning problem.
\begin{figure*}[htbp]
    \centering
    \begin{subfigure}[b]{0.48\textwidth}
        \centering
        \includegraphics[scale=0.4]{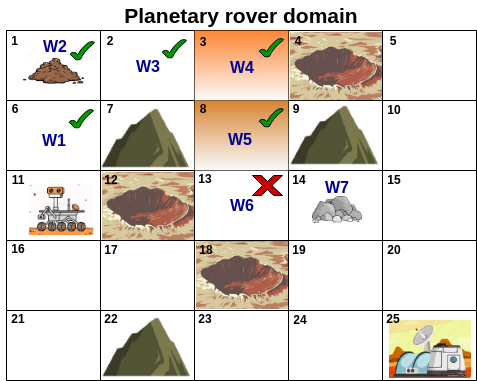}
        \caption{$W6$ is the first unreachable waypoint and serves as an explanation of the planning problem being unsolvable.}
        \label{fig:waypoints}
    \end{subfigure}
    \hfill
    \begin{subfigure}[b]{0.48\textwidth}
          \centering
         \includegraphics[height=165pt,width=0.97\textwidth]{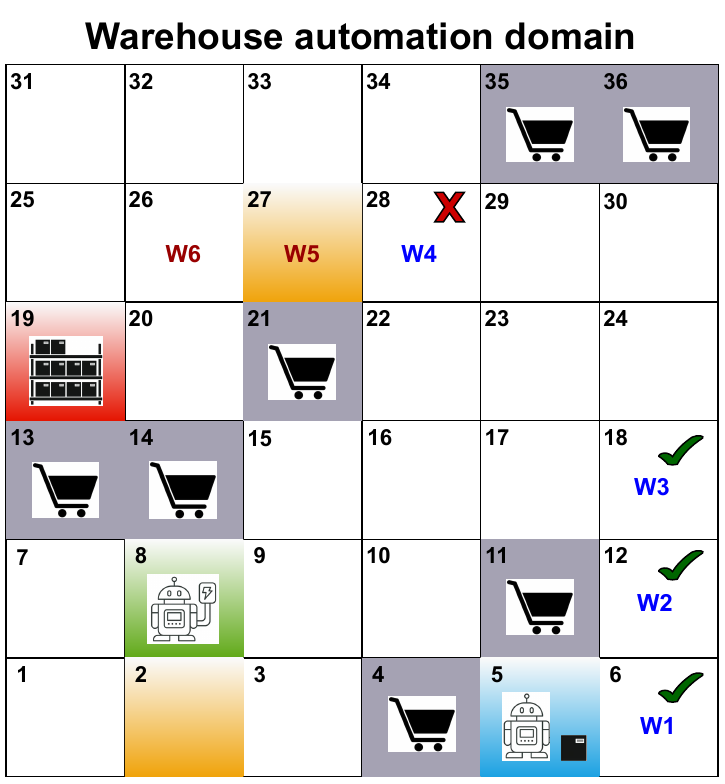}
         \caption{
         $W4$ is the first unreachable waypoint for the planning problem.
         }
         \label{fig:waypoints1}
    \end{subfigure}
    \caption{Illustration of results in the example scenarios.}
    \label{fig:res-way}
\end{figure*}


In Figure~\ref{fig:res-way}, we illustrate our method in the example scenarios of \emph{Planetry rover} and \emph{Warehouse automation} domains.
In the motivating example problem instance in \emph{Planetary rover} domain, the algorithm identified 7 sub-problems $\{W1, W2,\ldots,W7\}$ in Y$^<$ with the total order $W1 < W2 < \ldots  < W7$, each representing an inevitable waypoint. Our explanation algorithm detects five waypoints ($W1$-$W5$) as reachable depicted by Green ticks and reports $W6$ as the first unreachable waypoint shown by a Red cross in Figure~\ref{fig:waypoints}. The unreachability of $W6$ can lead the human expert to deduce that the rover's initial battery charge is insufficient to drive it past the ascending regions in $W4$ and $W5$. Figure~\ref{fig:waypoints1} shows the identified waypoints and an explanation on a 6$\times$6 warehouse domain for a planning problem where a robot needs to carry a consignment from its initial location to the goal location.
In the domain, Blue and Red cells are the initial and goal locations, respectively. Yellow cells have surfaces with oil-spillage and therefore robot has a greater rate of battery depletion in these cells. Grey cells are blocked. The Green cell is the only charging station.
The explanation algorithm identified 6 sub-problems $\{W1,W2,\ldots,W6\}$ for a planning problem depth bound of 12, and 4 sub-problems $W1$ to $W4$ for a depth bound of 17, respectively. The waypoints $W5$ and $W6$ are inevitable only when the depth bound is 12 since a path longer than 12 in length may not mandatorily visit these waypoints. Note that these vertices ($W5$ and $W6$) are not articulation points whereas the other waypoints ($W1-W4$) are articulation points of the warehouse grid graph. In both the problem instances, our algorithm reports that the robot can not reach the waypoint $W4$ under the dynamics which is the an explanation of unsolvability. A control engineer can deduce that the initial battery charge and the charge capacity of the robot is not sufficient to reach the waypoint directly or via the recharging station. Therefore, a higher charge capacity or a better placement of the charging station close to the waypoint $W4$ may be a work-around to make the task solvable.

\section{Related works} \label{related_works}
Some notable works addressing the unsolvability of planning problems, mostly looked at verifying the unsolvability by generating certificates~\cite{DBLP:conf/aips/ErikssonH20}, \cite{DBLP:conf/aips/ErikssonRH17} or proofs \cite{DBLP:conf/aips/ErikssonRH18} rather than explaining the causalities of unsolvability of the planning problem. Such certificates or proofs of unsolvability are not enough to increase the human understandability of why the problem was unsolvable.
Most of these works focus on planning problems in discrete domains. Verifying the unsolvability of planning problems in hybrid systems comes with an additional challenge since these planning problems are undecidable in general \cite{ALUR19953}.
In~\cite{DBLP:journals/tecs/SarwarRB23}, authors provide an approach to addressing the unsolvability of a planning problem in hybrid domains by $\delta$-approximate bounded reachability analysis \cite{DBLP:journals/corr/GaoKCC14}. However, this work also verifies unsolvability rather than explaining it.
Few notable works that are directed towards explaining the unsolvability of a planning problem are, similarly, limited to classical planning problems.
Authors in~\cite{DBLP:conf/aips/GobelbeckerKEBN10} argue that excuses can be produced by counterfactual alterations to the original planning task such that the new planning task turns out to be solvable, and provides excuses for why a plan cannot be found.
In~\cite{DBLP:conf/aaai/EiflerC0MS20}, authors derive properties of a plan which could serve as explanations in case of unsolvability.
However, generating excuses, or deriving plan properties in terms of propositional formulas may not be enough to understand why a problem was unsolvable for complex domains like planning problems of hybrid systems which encode mixed discrete and continuous dynamics.
In \cite{DBLP:journals/jair/VasileiouYSKCM22}, an approach based on knowledge representation and reasoning has been applied to these domains. It provides explanations by finding a subset of the agent's knowledge base with which to reconcile the human knowledge base for explanations. However, it does not address unsolvability problems, rather, explains why a plan is feasible in a model.
A path-oriented reconciliation process between the agent and human models of hybrid systems is provided in~\cite{DBLP:conf/memocode/SarwarRB23}. It performs the reachability analysis along a path and uses the concept of irreducible infeasible sets (IIS) to generate explanations for unsolvability.

In this work, we propose to decompose an unsolvable planning problem into sub-problems motivated by the well-known insight that humans tend to break down sequential planning problems in terms of the sub-problems they need to achieve \cite{LT-1956,DBLP:journals/ai/VanLehn86}.
This has been a popular approach in many domains such as robotics \cite{24200} and AI \cite{DBLP:journals/ai/SuttonPS99} apart from planning \cite{DBLP:journals/jair/HoffmannPS04,DBLP:conf/ecai/LipovetzkyG12,DBLP:conf/aaai/RichterHW08}. 
\cite{DBLP:journals/jair/HoffmannPS04,DBLP:conf/ecai/LipovetzkyG12} find sub-problems for a solvable planning problem of the discrete domains in terms of ordered landmarks. Landmarks are facts given as propositional formulas that must be true at some point in every valid solution plan.
In~\cite{DBLP:conf/ijcai/SreedharanSSK19}, authors use hierarchical model abstractions to relax a planning problem until a solution can be found and looks for landmarks of this relaxed problem. They use these landmarks to identify the unachievable sub-problem for the planning problem. These works are in discrete domains. In contrast, our framework decomposes an unsolvable planning problem of hybrid domains into several smaller sub-problems by reducing it to an instance of longest common subsequence problem and consequently generating explanations using reachability analysis. 
\section{Conclusion} \label{conclusion}
In this paper, we explore the area of explaining the unsolvability of planning problems for hybrid systems by means of detecting the inevitable sub-problems that must be solvable in order for the bigger problem to be solvable.
We show a reduction from the problem of finding sub-problems and an ordering between them to finding the LCS of a finite set of path strings.  We present an explanation artifact through these sub-problems and by conducting reachability analysis. Results emphasize that our framework can efficiently identify inevitable sub-problems and the first infeasible one among them as an explanation for unsolvability of a planning problem. We believe that explanations reported by our algorithm can help a control engineer, an AI planner, or a human supervisor to comprehend the cause of unsolvability of the planning problem at hand.
\clearpage
\bibliographystyle{plain}
\bibliography{mybib}
\clearpage
\section{Appendix}\label{appendix}
\textbf{City-network domain:}
the context for this domain is a car that wants to reach a destination through a route network of a city. Figure~\ref{fig:city_example} shows the route network of the city. It has 10 important junctures. Blue-colored and green-colored routes connect these junctures. They respectively represent both-way and one-way traffic in the city. The direction of the traffic in green routes is shown with a directed arrow. The car is initially at juncture A. The juncture A is shown as the green-colored node in the figure. The car has a battery that depletes energy at a constant rate represented by a variable $b$. Initially, it has 20 units of battery charges available. Similarly, the juncture-to-juncture movement delay for the car is represented by a variable $d$ where each route has a different delay. From a juncture, the car can only move to the adjacent junctures following the route between them. The junctures H (red-colored), I (yellow-colored), and J (blue-colored) are the destinations of three different planning problems of the domain. The orange-colored nodes in the figure are the waypoints that appear in every source-to-destination path for a planning problem of A to H. The routes represent the discrete dynamics of the domain that captures the connectivity of the junctures of the city. The continuous dynamics of the domain involve energy depletion and the juncture-to-juncture movement delay of the car due to different traffic patterns.


\begin{figure}[htbp]
    \centering
    \includegraphics[width =0.66\textwidth]{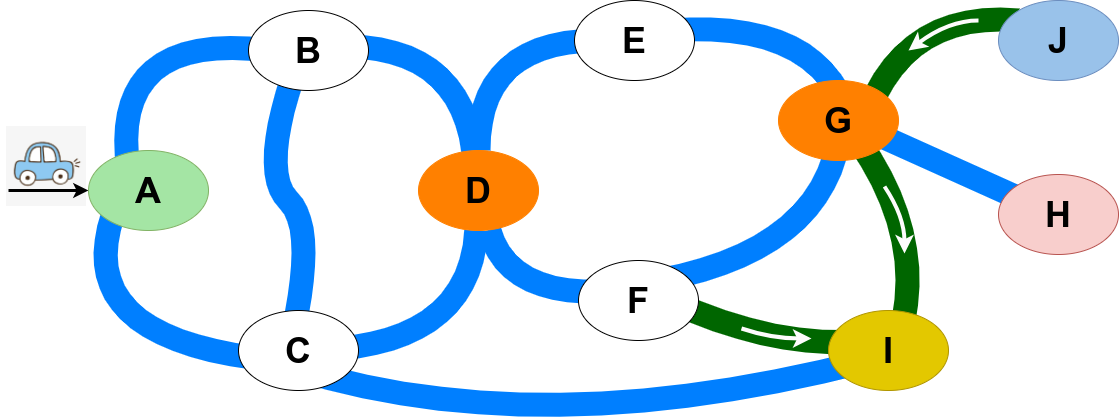}
    \caption{The city route network is depicted. Each node represents the important junctures of the city. The blue-colored routes represent both-way transportation between junctures. The green-colored routes represent one-way traffic. Initially, the car is at juncture A (shown as the green-colored node). The junctures I (yellow-colored), H (red-colored), and J (blue-colored) are the destinations of three different planning problems of the domain.
    The orange-colored nodes in the figure are the waypoints that appear in every source-to-destination path for a planning problem of A to H.}
    \label{fig:city_example}
\end{figure}

\noindent \textbf{Warehouse automation domain:} We present the context of warehouse automation \cite{DBLP:conf/memocode/SarwarRB23} where a robot operates to manage the inventories of the warehouse. The warehouse is divided into cells. The discrete dynamics here capture the connectivity of the cells along with the presence of objects in certain cells which are interpreted as obstacles through which the robot cannot move. The movement of the robot is restricted to one of its adjacent cells and movement to diagonal cells is prohibited. The continuous dynamics capture the battery charge depletion rate of the robot within a cell. Within each cell, the robot follows the dynamics particular to that cell. When the robot makes a transition from one cell to another, it starts to follow the dynamics of the new cell instantaneously. The robot is assigned the task of carrying a consignment to a designated cell while the number of cell visits is restricted to $\leq$ $D$ cells. The robot starts from the yellow-colored cell where the planning problem requires it to transport the black box to the goal cell (red-colored cell). The robot depletes its charge according to the cell dynamics while on the move. There is a charging station shown as a green-colored cell. The robot may visit this cell to recharge its battery. The grey-colored cells are blocked with obstacles. The robot is equipped with a rechargeable battery. The initial battery charge is 10 units. Each cell has a charge depletion rate of 2 units (modeling the continuous dynamics). 
In Figure~\ref{fig:warehouse-domain}, we have shown a representation of the warehouse automation domain. Now, consider the planning problem where the robot needs to carry the consignment to the goal from its initial location. Every feasible path for the robot must go through the cells marked with hatched lines (orange-colored) as shown in the figure. We consider these cells as landmarks for the planning problem. A landmark here means a cell that a robot must visit on its way to the goal.

\begin{figure}[htbp]
    \centering
    \includegraphics[width =0.66\textwidth]{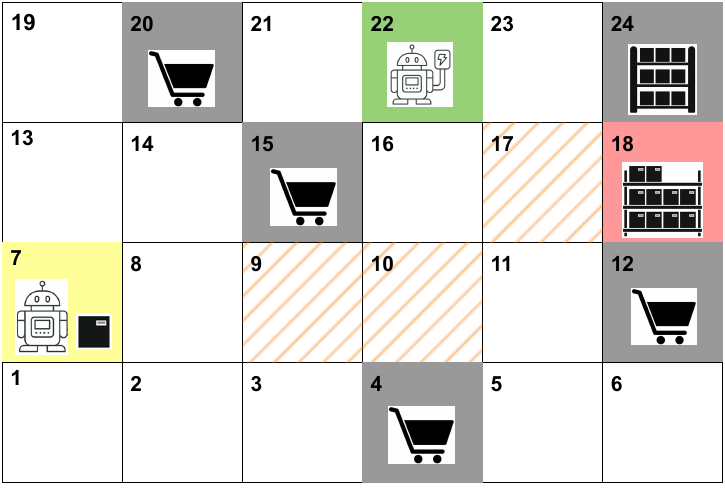}
    \caption{Warehouse automation domain.}
    \label{fig:warehouse-domain}
\end{figure}


\end{document}